\newtheorem{theorem}{Theorem}[section]
\newtheorem{lemma}{Lemma}[section]
\newtheorem{corollary}{Corollary}[section]
\newtheorem{example}{Example}
\newcommand{\stam}[1]{}
\newcommand{\cd}{{\cal D}}
\newcommand{\cn}{{\cal N}}
\DeclareMathOperator*{\sign}{sign}
\newcommand{\reals}{{\mathbb R}}
\newcommand{\sphere}{{\mathbb S}}
\newcommand{\ball}{{\mathbb B}}
\title{Most ReLU Networks Suffer from $\ell^2$ Adversarial Perturbations}
\author{%
  Amit Daniely \\
  School of Computer Science and Engineering, The Hebrew University, Jerusalem, Israel and Google Research Tel-Aviv \\
  \texttt{amit.daniely@mail.huji.ac.il} \\
  \And
  Hadas Schacham \\
  School of Computer Science and Engineering, The Hebrew University, Jerusalem, Israel\\
  \texttt{hadas.schacham@mail.huji.ac.il} \\
}
\begin{document}

\maketitle

\begin{abstract}
We consider ReLU networks with random weights, in which the dimension decreases at each layer.
We show that for most such networks, most examples $x$ admit an adversarial perturbation at an Euclidean distance of $O\left(\frac{\|x\|}{\sqrt{d}}\right)$, where $d$ is the input dimension. Moreover, this perturbation can be found via gradient flow, as well as gradient descent with sufficiently small steps.
This result can be seen as an explanation to the abundance of adversarial examples, and to the fact that they are found via gradient descent.

\end{abstract}

\section{Introduction}
Since the seminal paper of~\citet{szegedy2014intriguing}, adversarial examples arose much attention in machine learning, with various attacks (e.g. ~\cite{athalye2018obfuscated, carlini2017adversarial, carlini2018audio, goodfellow2014explaining, grosse2017statistical}) and defence methods (e.g. \cite{papernot2016distillation, papernot2017practical, madry2017towards, wong2018provable, feinman2017detecting}) being developed, as well as various attempts to explain their presence (e.g. \cite{fawzi2018adversarial, shafahi2018adversarial, shamir2019simple, schmidt2018adversarially, bubeck2019adversarial}).
Yet, it is still not clear why adversarial examples exist, and why they can be found via simple algorithms such as gradient descent.

In this paper we shed new light on the source of this phenomenon, and show that for certain network architectures, for most choices of weights and for most examples $x$, an adversarial example at a Euclidean distance of $\tilde O\left(\frac{\|x\|}{\sqrt{d}}\right)$ is guaranteed to exists. Specifically, we show that this holds if each layer reduces the dimension. 

Moreover, we show that gradient flow (a continuous analog of gradient descent), or gradient descent with sufficiently small steps, is guaranteed to find these adversarial examples. This result demonstrates that unless we impose restrictions on the weights and/or the examples, we should expect the phenomenon of adversarial examples to occur.

To the best of our knowledge, this is the first result which shows existence of adversarial examples w.r.t. the Euclidean distance for a large class of networks and distributions. Likewise, it is the first result that shows that gradient based algorithms are guaranteed to find such perturbations.

\subsection{Related Work}
Several recent theoretical papers have addressed the question of why adversarial
examples exist in machine learning.  
\citet{schmidt2018adversarially} show that the sample complexity of training adversarially robust classifiers might be larger than standard training, while \citet{bubeck2019adversarial} show other cases, in which adversarially robust training is computationally harder than standard training. 

\citet{fawzi2018adversarial} use concentration of measure result to show that for several subsets of $\reals^d$, such as the sphere, the ball, or the cube, any partition of the set into a few subset of non-negligible mass (w.r.t. to the uniform measure on these spaces) will result with abundance of adversarial examples. That is, most examples will have a nearby example that belongs to a different part of the partition. This result shows that {\em any} classifier that realizes this partition will suffer form adversarial examples. \citet{shafahi2018adversarial} extend these results to classification tasks in which the examples are generated by certain generative models.

As opposed to our result, in the results of~\cite{fawzi2018adversarial, shafahi2018adversarial} the existence of adversarial examples leans on the input distribution rather than the network which is used for classification. That is, their result show that in the cases under study {\em any} classifier will suffer from adversarial perturbation. This is in contrast to our result (e.g. corollary \ref{cor:ranodm_weights} which applies to any input distribution) in which the existence of adversarial perturbation leans on the network. In other words, we show existence of adversarial examples  even in cases where there exists some classifier that does not suffer from adversarial examples. 

We argue that in the context of adversarial examples, cases in which there is some classifier that does not suffer from adversarial examples are of particular interest. Indeed, the very existence of adversarial examples hinges on the fact that humans solves the task at hand without suffering from adversarial examples.

Lastly, let us mention \citet{shamir2019simple} which is the closest to our work, and in fact inspired this paper. They proved that for ReLU networks, under rather mild conditions, {\em any example} will have an adversarial perturbation with small $\ell_0$ distance. That is, it is possible to change just a few input coordinates to generate an adversarial perturbation. Moreover, they have shown that such a perturbation can be found by a simplex-like algorithm.

The advantage of \citet{shamir2019simple} is that the conditions on the network's weights are rather mild. On the other hand, the advantage of our approach is that we consider the $\ell_2$ distance which is more natural than the $\ell_0$ distance. Indeed, in \citet{shamir2019simple} there is no bound on the magnitude of the movement that is required in every coordinate. In particular, the guaranteed adversarial perturbation may have coordinates that are out of the relevant range, and therefore can be detected easily or even won't be deemed as a legal input to the network. The following example demonstrate that this might happen even in very simple settings.

\begin{example}
Consider any example $x$ in $[-1,1]^d$, and a linear classifier $w\in \left[-\frac{1}{\sqrt{d}},\frac{1}{\sqrt{d}}\right]^d$ 
such that $w^\top x\ge 1$. As explained in section \ref{sec:prelim}, 
the scale of $x$, $w$ and $w^\top x$ is rather standard. 
Suppose that we want to find an adversarial perturbation to $x$ by changing just $O(1)$ coordinates. It is not hard to see that the magnitude of the change in at least one of the coordinates must be $\Omega(\sqrt{d})$. As the original input $x$ is in $[-1,1]^d$ it is quite likely that the range of the adversarial perturbation won't result in a reasonable input.
\end{example}

Another advantage of our result is that in contrast to \citet{shamir2019simple} who consider a simplex-like algorithm for finding adversarial perturbation, we use gradient flow (or gradient descent with sufficiently small steps). As adversarial examples are usually sought with gradient based algorithms, our result gives a better explanation to why adversarial perturbations are found in practice.

\section{Preliminaries}\label{sec:prelim}

\paragraph{Neural Networks}
We will consider fully connected ReLU neural networks defined by weights $\vec{W} = \left(W_1,\ldots,W_{t-1},W_t\right)$, 
where for each $1\le j\le t$ $W_j$ is
a $d_{j+1}\times d_{j}$ matrix. We denote the input dimension by $d = d_1$ and assume that the output dimension $d_{t+1}$ is $1$. The function computed by the network defined by the weights $\vec{W}$ is
\[
h_{\vec{W}}(x) = W_{t}\circ\sigma \circ W_{t-1}\circ\ldots\circ \sigma\circ W_1(x)
\]
where $\sigma$ is the ReLU function, with the convention that when it is applied on vectors it operates coordinate-wise. We will denote by $h_{W_1,\ldots,W_i}(x)$ the output of the $i$'th layer, before the ReLU is applied, and by $\sigma(h_{W_1,\ldots,W_i}(x))$ the output of the $i$'th layer after the ReLU is applied.

\paragraph{Random Weights}
We next describe the distribution over the space of weights that we will consider.
A {\em random weight matrix} is a $k\times d$ matrix whose elements are i.i.d. centered Gaussians. {\em Random weights} are weights $\vec{W} = \left(W_1,\ldots,W_{t-1},W_t\right)$ where for each $1\le j\le t$, $W_j$ is a random weight matrix. We say that a random $k\times d$ weight matrix in {\em normalized} if the variance of the Gaussians is $\frac{1}{d}$. This normalization is rather standard~\cite{glorot2010understanding} in both theory and practice of neural networks, as under it the scale of the weights resembles the scale of weights in real world networks. Indeed, for a fixed example whose coordinates have magnitude of $O(1)$, the magnitude (the second moment to be precise) of the input to all neurons is $O(1)$ (see ~\cite{glorot2010understanding}).

\paragraph{Gradient flow} Given a function $h:\reals^d\to\reals$ and a point $x_0\in\reals^d$, the gradient flow starting at $x_0$ is the trajectory $\gamma(t)$ that satisfies $\gamma(0) = x_0$ and $\gamma'(t) = -\nabla h(\gamma(t))$. We note that gradient descent is a discretization of gradient flow, and its trajectory becomes closer and closer to the trajectory of gradient flow as the step size gets closer to $0$. In the context of adversarial examples, given weights $\vec{W}$ and an example $x_0\in\reals^d$, adversarial perturbation is often sought by performing gradient flow over the function $x\mapsto yh_{\vec{W}}(x)$, where $y = \sign(h_{\vec{W}}(x_0))$.

\paragraph{Convention} Throughout the paper, big-O notations are w.r.t. the input dimension $d$.

\section{Results}

\subsection{Result for Random Matrices}
Our first result considers networks in which the dimension decreases in every layer. It shows that for most such networks, most examples $x$ will have an example $x'$ such that (1) its distance from $x$ is $\tilde O\left(\frac{\|x\|}{\sqrt{d}}\right)$ and (2) $\sign(h_{\vec{W}}(x'))\ne \sign(h_{\vec{W}}(x))$.
Moreover, $x'$ can be found by gradient flow. 

\begin{theorem}\label{thm:ranodm_weights}
Assume that for any $1\le j\le t$, $d_{j+1} = o(d_j)$ and that $d_{t}= \omega(1)$. 
Fix any non-zero example $x_0\in\reals^d$ and let $\vec{W}$ be random weights. Then, w.p. $1-o(1)$, gradient flow of length $\tilde O\left(\frac{\|x_0\|}{\sqrt{d}}\right)$ starting at $x_0$ will flip the sign of network's output.
\end{theorem}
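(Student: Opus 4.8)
The plan is to analyze the gradient flow on $g(x) := y\, h_{\vec W}(x)$ with $y = \sign(h_{\vec W}(x_0))$, and show that it must drive $g$ from its initial positive value down to $0$ after traveling only length $\tilde O(\|x_0\|/\sqrt d)$. The key quantitative handle is the ratio between the value $g(\gamma(s))$ and the gradient norm $\|\nabla g(\gamma(s))\|$ along the trajectory: since $\frac{d}{ds} g(\gamma(s)) = -\|\nabla g(\gamma(s))\|^2$ and arc length accumulates at rate $\|\nabla g(\gamma(s))\|$, if we can show $g(x) \le C\,\|x\|\,\|\nabla g(x)\|/\sqrt d$ (a Polyak-type / directional-derivative inequality) on the relevant region, then the flow reaches a sign flip after length $O\!\left(\frac{C\,\|x_0\|}{\sqrt d}\right)$, up to controlling how $\|\gamma(s)\|$ grows. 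So the heart of the matter is: for random weights with geometrically shrinking widths, with high probability and for most directions, the function $h_{\vec W}$ has gradient that is ``large relative to its value'' almost everywhere.

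The first main step is a single-layer / local linearity observation: ReLU networks are piecewise linear, so on the piece containing a point $x$ we have $h_{\vec W}(x) = \langle v(x), x\rangle + c(x)$ where $v(x) = \nabla h_{\vec W}(x)$ is the product of the weight matrices restricted to the active neurons, and $v,c$ are locally constant. The plan is to lower bound $\|v(x)\|$ relative to $|h_{\vec W}(x)|$. I would argue layer by layer: because each $W_j$ is a random Gaussian matrix mapping $\reals^{d_j}\to\reals^{d_{j+1}}$ with $d_{j+1} = o(d_j)$, it acts on any fixed vector (in particular on the post-ReLU activation vector at layer $j$) essentially like a scaled near-isometry onto a random lower-dimensional subspace — the singular values concentrate. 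Passing through the ReLU roughly halves the squared norm in expectation and, crucially, keeps the vector ``spread out'' (no single coordinate dominates), which is what lets the next Gaussian layer continue to behave like a random projection. Iterating, one gets that the Jacobian of the whole network, evaluated along the trajectory, has norm bounded below by roughly $\prod_j \sqrt{d_{j+1}/d_j}$ times a $\poly(1/d)$-type factor, while the scalar output stays $O(\|x\|)$ in magnitude; the ratio $|h|/\|\nabla h|$ is then $\tilde O(\|x\|/\sqrt d)$. The dimension-decrease hypothesis and $d_t = \omega(1)$ are exactly what make these concentration statements hold simultaneously with probability $1-o(1)$.

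The second main step is to control the trajectory itself. Two things must be handled: (i) the flow crosses finitely many linear regions, and the local analysis must be valid in each region the trajectory actually visits — here I would note that $g$ is decreasing along the flow, the number of regions crossed in bounded length is controlled, and a union bound over the (randomly determined but, conditioned on the activation pattern, still Gaussian-structured) pieces keeps the failure probability $o(1)$; (ii) the norm $\|\gamma(s)\|$ could in principle grow, inflating the budget — but since the total length is only $\tilde O(\|x_0\|/\sqrt d) = o(\|x_0\|)$, we have $\|\gamma(s)\| \le (1+o(1))\|x_0\|$ throughout, so replacing $\|\gamma(s)\|$ by $\|x_0\|$ in the bound costs only constants. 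Combining, the flow either hits $g=0$ (a sign flip, since $g$ is continuous and was positive) within the claimed length, or it would have to stay in a region where $|h|/\|\nabla h\|$ is large, contradicting the high-probability bound.

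The hard part will be the geometric/probabilistic core of step one: showing that the activation vectors stay ``well-spread'' as they propagate, so that each successive random Gaussian layer genuinely behaves like a random projection with controlled smallest singular value on the relevant subspace, and doing this \emph{along a whole trajectory} rather than at a single fixed input (the activation pattern, hence $v(x)$, depends on $x$, so one cannot simply fix an input and apply anti-concentration once). I expect this to require a net/union-bound argument over activation patterns together with the fact that only a shrinking ($o(1)$) fraction of coordinates can be "borderline" at any point, plus the observation that the effective objective $g$ being monotone along the flow limits which patterns are reachable. Everything else — the Polyak-type integration of the flow, the norm-growth bound, translating "gradient large relative to value" into "short flow to a sign flip" — is routine once that estimate is in hand.
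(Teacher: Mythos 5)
Your high-level plan — lower bound $\|\nabla h_{\vec W}\|$ along the trajectory and integrate the flow — is the right shape, and you correctly flag the hard part: the activation pattern, hence the gradient, depends on $x$, so one cannot anti-concentrate at a single fixed input. But the mechanism you propose for handling this (a net/union bound over activation patterns reached by the flow, restricted using monotonicity of $g$) is not what the paper does and has a circularity problem: the set of pieces the flow can visit is itself a random object determined by $\vec W$, so it is not something one can union-bound over in advance. The paper sidesteps the trajectory entirely by proving a uniform \emph{matrix} property first, called $(c_1,c_2)$-surjectivity: for a random $k\times d$ matrix with $k=o(d)$, w.h.p.\ \emph{every} submatrix formed by $\ge c_1 d$ columns has least singular value $\ge c_2$. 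This is proven by a net over $\mathbb S^{k-1}$ and, for each net point $y$, bounding the sum of the $\lceil c_1 d\rceil$ smallest squared entries of $y^\top W$ — a single probability event that handles all column subsets simultaneously. The trajectory is then handled by a deterministic Lipschitz argument: if the example is ``typical'' (at least $2c_1$ fraction of neurons per layer have input $\ge c_2$, and $|h_{\vec W}(x_0)| = O(\|x_0\|\sqrt{\ln d/d})$), then moving $O(\sqrt{\ln d})$ in input space changes each layer's pre-activation by $O(\sqrt{\ln d})$, so only $O(\ln d)$ of the $\omega(\ln d)$ ``solidly on'' neurons can flip. Hence every $x$ in a small ball around $x_0$ still has $\ge c_1 d_i$ active neurons per layer, surjectivity applies to each restricted $W_i^x$, and $\|\nabla h_{\vec W}(x)\| \ge c_2^t$ uniformly on the ball — no union over patterns needed.

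Two further points. First, the Polyak-type inequality $g(x)\le C\|x\|\,\|\nabla g(x)\|/\sqrt d$ as stated is too weak to force a sign flip: it implies $g$ decays at most exponentially in arc length and never reaches zero. What is actually needed (and what the paper gets) is a lower bound on $\|\nabla g\|$ that is \emph{independent} of the current value of $g$ on the whole ball, combined with an upper bound on the \emph{initial} value $|h_{\vec W}(x_0)|$; then the flow of length $|h_{\vec W}(x_0)|/\inf\|\nabla g\|$ must cross zero. Second, the scaling in your sketch doesn't close: with normalized weights the spectral norm of each layer is $O(1)$, the gradient norm is $\Omega(1)$, and the crucial (and not-obvious) ingredient you omit is that the \emph{scalar output at initialization} concentrates at scale $\|x_0\|/\sqrt d$ rather than $\|x_0\|$ — that factor of $\sqrt d$ is exactly where the $\tilde O(\|x_0\|/\sqrt d)$ in the theorem comes from, and bounding the output merely by $O(\|x_0\|)$ as in your sketch loses it entirely.
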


An immediate implication of theorem \ref{thm:ranodm_weights} together with Markov's inequality is that given {\em any} input distribution $\cd$ and a random network, most examples according to $\cd$ will have a close adversarial perturbation, which can be found by gradient flow:
\begin{corollary}\label{cor:ranodm_weights}
Assume that for any $1\le j\le t$, $d_{j+1} = o(d_j)$ and that $d_{t}= \omega(\ln(d))$. 
Fix a distribution $\cd$ on $\reals^d$ and let $\vec{W}$ be random weights. Then, w.p. $1-o(1)$ over the choice of $\vec{W}$ the following will hold.
If $x_0\sim\cd$, then w.p. $1-o(1)$ over the choice of $x_0$ gradient flow of length $\tilde O\left(\frac{\|x_0\|}{\sqrt{d}}\right)$ starting at $x_0$ will flip the sign of network's output.
\end{corollary}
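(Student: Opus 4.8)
The corollary reduces to Theorem~\ref{thm:ranodm_weights} by a routine averaging argument, so I will dispense with that quickly and then describe a proof of the theorem, which is where the content lies. For the reduction: for each fixed $x_0\neq 0$, Theorem~\ref{thm:ranodm_weights} bounds the probability over $\vec W$ that gradient flow from $x_0$ fails to flip the sign within length $\tilde O(\|x_0\|/\sqrt d)$ by some $p=o(1)$ — the same $p$ for every $x_0$, since $h_{\vec W}$ is positively homogeneous of degree one (ReLU networks have no bias) and the law of $\vec W$ is rotation-invariant — and $\omega(\ln d)$-widths are in particular $\omega(1)$, so the theorem applies. By Fubini the $\vec W$-expectation of $\Pr_{x_0\sim\cd}[\text{failure}]$ equals $p=o(1)$, so Markov's inequality gives that with probability $1-o(1)$ over $\vec W$ we have $\Pr_{x_0\sim\cd}[\text{failure}]=o(1)$, which is the claim.

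To prove the theorem, I would run gradient flow on $f:=y\,h_{\vec W}$ with $y:=\sign(h_{\vec W}(x_0))$ (well defined since $h_{\vec W}(x_0)\neq 0$ almost surely), and show that with probability $1-o(1)$ over $\vec W$ it drives $f$ below $0$ — hence flips the output's sign — within arc length $\tilde O(\|x_0\|/\sqrt d)$. Parametrizing the trajectory $\gamma$ by arc length, $\tfrac{d}{ds}f(\gamma(s))=-\|\nabla f(\gamma(s))\|$, so if $\|\nabla h_{\vec W}\|\ge m$ along $\gamma$ until $f$ hits $0$, then $f$ hits $0$ within arc length $f(x_0)/m=|h_{\vec W}(x_0)|/m$. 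It therefore suffices to prove, with probability $1-o(1)$ over $\vec W$: (a) $|h_{\vec W}(x_0)|$ is small; and (b) $\|\nabla h_{\vec W}(x)\|\ge m$ for a suitable $m$ and all $x$ in a ball $B(x_0,r)$ of radius $r:=\tilde O(\|x_0\|/\sqrt d)$. Since a curve's displacement is at most its arc length, a short continuation argument keeps $\gamma$ inside $B(x_0,r)$ for the whole descent, so (b) holds all along $\gamma$. (One must also check that $\gamma$ does not linger on the measure-zero set where $h_{\vec W}$ fails to be differentiable, so that it is a genuine polygonal curve.)

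For (a) I would propagate squared norms forward; rescaling the Gaussian variances to the normalized value $1/d_j$ is without loss of generality, since scaling the weights only reparametrizes time along the flow and leaves the trajectory and its length unchanged. Because $d_{j+1}=o(d_j)$ and all intermediate widths are $\omega(1)$, applying $W_j$ to a vector fixed by the earlier layers multiplies its squared norm by $(1\pm o(1))\tfrac{d_{j+1}}{d_j}$ and the ensuing ReLU multiplies it by $(1\pm o(1))\tfrac12$; chaining over the $t-1$ hidden layers and bounding the scalar output layer by a Gaussian tail estimate gives $|h_{\vec W}(x_0)|\le\tilde O(\|x_0\|\,2^{-(t-1)/2}/\sqrt d)$. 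For the gradient at $x_0$ the same dimension drop works in reverse: each $W_j^{\top}$ acts as a $(1\pm o(1))$-isometry (its Gaussian matrix is far from square, so all its singular values cluster at $1$), so backpropagating through $\nabla h_{\vec W}(x_0)^{\top}=W_t\,D^{(t-1)}\,W_{t-1}\cdots D^{(1)}\,W_1$ cuts the norm only through the $t-1$ diagonal ReLU masks $D^{(j)}$, each of which roughly halves the squared norm, so $\|\nabla h_{\vec W}(x_0)\|\ge(1-o(1))\,2^{-(t-1)/2}$. Combined, these already give the target ratio $|h_{\vec W}(x_0)|/\|\nabla h_{\vec W}(x_0)\|=\tilde O(\|x_0\|/\sqrt d)$ at the starting point.

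The main obstacle is upgrading (b) from $x_0$ to all of $B(x_0,r)$: the gradient norm must be \emph{stable} under perturbations of size $r\approx\|x_0\|/\sqrt d$, which is delicate precisely because $r$ is comparable to the typical size of a single first-layer pre-activation, so the activation pattern is not frozen on the ball. I would argue in three steps. First, $\|W_j\|_{\mathrm{op}}\le 1+o(1)$ (again, the Gaussian matrix is far from square) together with $1$-Lipschitzness of ReLU imply that moving $x$ within $B(x_0,r)$ moves each pre-activation $z^{(j)}(x):=h_{W_1,\ldots,W_j}(x)$ by $\ell_2$-norm $\lesssim r$, a $o(1)$ fraction of $\|z^{(j)}(x_0)\|$. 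Second, because the coordinates of $z^{(j)}(x_0)$ are delocalized (conditionally on the previous layer they are i.i.d.\ Gaussian), such a perturbation can flip the sign of only a $o(1)$ fraction of the $d_{j+1}$ neurons at layer $j$ — and this must hold uniformly over all perturbation directions at once, which is the crux of the counting. Third, changing a $o(1)$ fraction of the entries of each mask $D^{(j)}$ perturbs the backpropagated covector by only a $o(1)$ fraction of its norm, because the relevant backprop vectors are themselves delocalized: they lie in the random low-dimensional images $\mathrm{Im}(W_{j+1}^{\top})$ and so are incoherent with every small coordinate subspace. Each step carries a union bound over the activation patterns realized in the ball, and the technical heart of the proof is making these union bounds close while sequencing the conditioning so that each mask $D^{(j)}$ — a function of $W_1,\ldots,W_j$ — is frozen before the fresh randomness of $W_{j+1},\ldots,W_t$ is used (and tracking how the estimates degrade with $t$ and with the smallest width $d_t$). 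Granting (a) and the uniform form of (b) with $m=\tfrac12(1-o(1))2^{-(t-1)/2}$, the flow drives $f$ from $|h_{\vec W}(x_0)|$ down to $0$ within arc length $\le|h_{\vec W}(x_0)|/m=\tilde O(\|x_0\|/\sqrt d)\le r$, so $\gamma$ never leaves $B(x_0,r)$; and since $\nabla f\neq 0$ at the zero-crossing the flow continues to strictly negative values of $f$, flipping $\sign(h_{\vec W})$ within total length $\tilde O(\|x_0\|/\sqrt d)$.
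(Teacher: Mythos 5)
Your reduction of the corollary to Theorem~\ref{thm:ranodm_weights} via Fubini and Markov is exactly the paper's one-line argument, and the homogeneity/rotation-invariance observation that makes the failure probability uniform in $x_0$ is a correct clarification of why Markov applies cleanly. Your sketch of the theorem itself, however, takes a genuinely different route. You aim at a \emph{stability} bound: establish $\|\nabla h_{\vec W}(x_0)\|\ge m$, then show $\nabla h_{\vec W}$ changes only by a $o(1)$ relative amount across $B(x_0,r)$, which requires (i) a uniform-in-direction bound that only a $o(1)$ fraction of activations flip anywhere in the ball and (ii) an incoherence claim that flipping a $o(1)$ fraction of mask entries perturbs each backpropagated covector by only $o(1)$. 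The paper sidesteps stability entirely with the ``submatrix surjectivity'' device of Theorem~\ref{thm:sujective}: a $(c_1,c_2)$-surjective matrix has every $\ge c_1 d$-column submatrix $c_2$-surjective, a typical $x_0$ has $\ge 2c_1 d_i$ neurons with margin $\ge c_2$ per layer, and a crude $\ell_2$-Lipschitz count shows that at most $O(\ln d)=o(d_i)$ of those high-margin neurons can switch off anywhere on the ball; hence $\ge c_1 d_i$ survive and $W_i^x$ is $c_2$-surjective \emph{pointwise}, giving $\|\nabla h_{\vec W}(x)\|\ge c_2^t$ for every $x$ in the ball with no comparison to $\nabla h_{\vec W}(x_0)$ and no dependence on \emph{which} columns got zeroed. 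That robustness-by-construction is what keeps the paper's argument short. Your version, if completed, would yield a marginally sharper gradient bound $(1-o(1))2^{-(t-1)/2}$ than $c_2^t$, but step (ii) is a genuine gap: the backprop covector at layer $j$ depends on the downstream weights \emph{and} on the realized masks (which drag in the upstream weights too), so the claim that it is incoherent with every $o(1)$-size coordinate subset, uniformly over all $x\in B(x_0,r)$, does not follow merely from the covector lying in a random row space; it would need its own net-and-union-bound lemma over ball-realizable activation patterns. The paper never needs such a statement, which is the real advantage of its surjectivity decomposition.
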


\subsection{Result for Strongly Surjective Matrices}

To prove theorem \ref{thm:ranodm_weights} we show that (1) random matrices that reduce the dimension have strong-surjectivity properties, and that (2) for any network whose matrices possess this surjectivity property, any ``typical" example has a close adversarial perturbation that can be found via gradient flow.  As we find the second result of independent interest, we outline it next. 

To this end, we first define and motivate the aforementioned surjectivity property.
For simplicity, we will work with normalized weights. 
We note that due to the homogeneity of the ReLU, our results for random weights are insensitive to the variance of the weights. Hence, restricting to normalized weights does not limit the generality of the result.

We denote by  $\ball^d$ the unit ball in $\reals^d$. 
For a constant $c>0$, we say that a $k\times d$ matrix $W$ is {\em $c$-surjective} if $c\ball^k\subset W\ball^d$. 
We note that if $W$ is a normalized random weight matrix with $d = (1 + \Omega(1))k$ then $W$ is $\Omega(1)$-surjective\footnote{This is implied by theorem \ref{thm:vershin} below, together with the fact that $W$ is $c$-surjective if and only if its least singular value is $\ge c$.}
w.h.p. 
We will rely on a stronger surjectivity property that is still valid w.h.p. 
We say that $W$ is $(c_1,c_2)$-surjective if any matrix that is composed of $\ge c_1d$ columns from $W$ is $c_2$-surjective. The following result shows that if $k = o(d)$ (as in the case of theorem \ref{thm:ranodm_weights}), and $W$ is a random weight matrix, then for any constant $c_1>0$, $W$ is $(c_1,\Omega(1))$-surjective w.h.p.
\begin{theorem}\label{thm:sujective}
Fix a constant $1 > c_1 >0$. There are constants $c_2,c_3> 0$, that depend only on $c_1$, for which the following holds.
Let $W$ be a normalized random $k\times d$ weight matrix with $k\le c_3d$.
Then, w.p. $1 - 2^{-\Omega(d)}$, $W$ is $\left(c_1,c_2\right)$-surjective.
\end{theorem}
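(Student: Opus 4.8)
The plan is to prove the statement via a union bound over all "small" subsets of columns, combined with a net argument on the target ball $\ball^k$. Fix $c_1 \in (0,1)$. For a subset $S \subseteq [d]$ with $|S| = m := \lceil c_1 d \rceil$, let $W_S$ denote the $k \times m$ submatrix of $W$ on columns indexed by $S$. We want: with probability $1 - 2^{-\Omega(d)}$, \emph{every} such $W_S$ satisfies $c_2 \ball^k \subset W_S \ball^m$. Since $W_S$ restricted to unit ball contains $W_S$ restricted to $\ball^m$ scaled appropriately, it suffices to lower-bound the smallest singular value $\sigma_{\min}(W_S) \ge c_2$ for all $S$; recall from the footnote that $W_S$ is $c_2$-surjective iff $\sigma_{\min}(W_S) \ge c_2$, where $\sigma_{\min}$ is the least singular value of the (wide) matrix $W_S : \reals^m \to \reals^k$.

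First I would recall the standard non-asymptotic bound on the least singular value of a Gaussian matrix (this is exactly the flavor of Theorem \ref{thm:vershin} referenced in the excerpt, which I am entitled to assume). For a normalized $k \times m$ Gaussian matrix with $m \ge (1+\delta)k$, one has $\sigma_{\min}(W_S) \ge c(\delta)$ with failure probability at most $2^{-\Omega(m)} = 2^{-\Omega(d)}$, where $c(\delta) > 0$ depends only on $\delta$. Here $\delta$ is controlled because $m = \lceil c_1 d\rceil$ and $k \le c_3 d$, so $m/k \ge c_1/c_3 =: 1 + \delta$; choosing $c_3 = c_1/2$ (say) forces $\delta \ge 1$, giving a fixed constant $c_2 := c(1)/2$ that depends only on $c_1$. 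So for each individual $S$, $\Pr[\sigma_{\min}(W_S) < c_2] \le 2^{-\Omega(d)}$.

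Then I would take a union bound over all choices of $S$. The number of $m$-subsets of $[d]$ is $\binom{d}{m} \le 2^d$. The key quantitative point — and the main thing to check carefully — is that the failure probability $2^{-\kappa d}$ for a single $S$ has $\kappa$ large enough to beat the $2^d$ entropy factor: we need $\kappa > 1$, or more honestly we need $\kappa$ from the Gaussian singular-value bound to exceed the $\ln 2$ coming from $\binom{d}{m}$. Since $\kappa$ scales like $\delta^2$ (roughly) and we have freedom to make $\delta$ as large as we like by shrinking $c_3$, this is where I would spend the care: pick $c_3$ small enough (as a function of $c_1$) that the per-subset exponent dominates, so that $\binom{d}{m} \cdot 2^{-\kappa d} = 2^{-\Omega(d)}$. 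This determines $c_3$, and $c_2$ is then the singular-value constant for the corresponding $\delta$.

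The main obstacle is precisely this balance between the combinatorial $\binom{d}{m}$ term and the concentration exponent, i.e. verifying that the large-deviation bound for $\sigma_{\min}$ of a rectangular Gaussian matrix is strong enough — exponential in the dimension with a controllable constant — to survive the union bound; everything else (monotonicity $c_1 d \le m$, passing between least singular value and surjectivity, and the reduction from arbitrary $\ge c_1 d$ columns to exactly $\lceil c_1 d\rceil$ columns, since more columns only enlarge the image) is routine. One minor subtlety worth a sentence in the writeup: a matrix built from $\ge c_1 d$ columns contains one built from exactly $\lceil c_1 d \rceil$ columns as a submatrix, and surjectivity is monotone under adding columns, so it genuinely suffices to handle the threshold case.
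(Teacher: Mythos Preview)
Your approach has a real gap at exactly the point you flagged as ``the main thing to check carefully'': the per-subset concentration exponent from Theorem~\ref{thm:vershin} \emph{cannot} be pushed high enough to beat the $\binom{d}{\lceil c_1 d\rceil}$ union bound for general $c_1\in(0,1)$, no matter how small you take $c_3$.

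Concretely, applying Theorem~\ref{thm:vershin} to the $k\times m$ submatrix $W_S$ (entries of variance $1/d$, $m=\lceil c_1 d\rceil$) gives
\[
\Pr\bigl[\sigma_{\min}(W_S)<\tfrac{\sqrt{m}-\sqrt{k}}{\sqrt{d}}-t\bigr]\;\le\;2e^{-dt^2/2}.
\]
To keep the lower bound positive you need $t<\tfrac{\sqrt{m}-\sqrt{k}}{\sqrt{d}}\le\sqrt{c_1}$, so the best exponent you can extract is at most $c_1 d/2$, \emph{independently of $c_3$}. Your assertion that ``$\kappa$ scales like $\delta^2$'' is incorrect: as $c_3\to 0$ the aspect ratio $\delta=m/k-1\to\infty$, but the exponent saturates at $c_1/2$. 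On the other side, $\ln\binom{d}{\lceil c_1 d\rceil}\sim h(c_1)\,d$ with $h$ the binary entropy in nats. The union bound closes only when $c_1/2>h(c_1)$, which fails for all $c_1\lesssim 0.85$ and in particular for the value $c_1=1/5$ actually used downstream in the proof of Theorem~\ref{thm:ranodm_weights}. So as written the plan does not prove the theorem.

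The paper avoids the union bound over subsets altogether. It fixes a direction $y\in\sphere^{k-1}$ and observes that the event ``for \emph{every} $A$ with $|A|\ge c_1 d$ one has $\|y^\top W^A\|\ge c_2'$'' is exactly the event that the sum of the smallest $\lceil c_1 d\rceil$ squared entries of the Gaussian vector $y^\top W$ exceeds $(c_2')^2$; this single event has probability $1-2^{-\Omega(d)}$ (Lemma~\ref{lem:sum_of_min_elements}). The only union bound is then over an $\epsilon$-net of $\sphere^{k-1}$, of size $2^{O(k)}\le 2^{O(c_3 d)}$, and \emph{here} shrinking $c_3$ genuinely kills the entropy term. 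A convex-separation argument (Lemma~\ref{lem:cover_to_ball}) upgrades the net statement to containment of a ball. If you want to rescue your subset-by-subset route, you would need a small-ball estimate for $\sigma_{\min}(W_S)$ with exponent that can be made an arbitrarily large multiple of $d$ by shrinking $c_2$; proving that essentially reproduces the paper's Lemma~\ref{lem:single_vec} plus a net over $\sphere^{k-1}$ inside each $S$, so you end up at the same argument.
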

In light of theorem \ref{thm:sujective} we say that weights $\vec{W} = (W_1,\ldots,W_{t-1}, W_t)$ are {\em $(c_1,c_2)$-typical} if for every $t$ the matrix $W_t$ is $(c_1 ,c_2)$-surjective and has spectral norm at most $\frac{1}{c_2}$. Theorem \ref{thm:sujective} together with theorem \ref{thm:vershin} implies that for any constant $c_1>0$, if $\vec{W}$ are normalized random weights with dimensions as in theorem \ref{thm:ranodm_weights}, then they are $(c_1,\Omega(1))$-typical w.h.p.
We say that an example is {\em $(c_1,c_2)$-typical  w.r.t. $\vec{W}$} if (1) in every layer the input value of at least $2c_1$ fraction of the neurons is $\ge \frac{\|x\|\cdot c_2}{\sqrt{d}}$ and (2)
$|h_{\vec{W}}(x)|\le \|x\|\sqrt{\frac{\ln(d)}{d}}$. It is holds that for any constant 
$c_1 < \frac{1}{4}$, and for any example $x$, if $\vec{W}$ are normalized random weights then w.h.p. over the choice of $\vec{W}$, the example $x$ is $(c_1,\Omega(1))$-typical  w.r.t. $\vec{W}$. (see lemma \ref{lem:typical_examples})

\begin{theorem}\label{thm:main_for_surjective}
Fix constants $1>c_1,c_2>0$ and depth $t\in \{2,3,\ldots \}$. Assume that $d_t = \omega(\ln(d))$, that $\vec{W}$ are $(c_1,c_2)$-typical weights, and that $x_0$ is a $(c_1,c_2)$-typical example w.r.t. $\vec{W}$. Then, gradient flow of length $\tilde O\left(\frac{\|x_0\|}{\sqrt{d}}\right)$ starting at $x_0$ will flip the sign of the network's output.
\end{theorem}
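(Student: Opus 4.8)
The plan is to run gradient flow on the map $x\mapsto y\,h_{\vec W}(x)$ with $y=\sign(h_{\vec W}(x_0))$; replacing $h_{\vec W}$ by $-h_{\vec W}$ if necessary we may assume $y=1$, so we flow on $h:=h_{\vec W}$, which decreases along the trajectory $\gamma$. Since $\frac{d}{ds}h(\gamma(s))=-\norm{\nabla h(\gamma(s))}^2$, reparametrizing by arc length $\ell$ gives $\frac{dh}{d\ell}=-\norm{\nabla h}$, so it is enough to show that $\norm{\nabla h}$ stays bounded below by a constant $\alpha>0$ on a ball around $x_0$ of radius strictly larger than $|h(x_0)|/\alpha$: then $h$ reaches $0$, and after an additional infinitesimal length goes strictly negative, within total arc length only slightly above $|h(x_0)|/\alpha$. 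Typicality of $x_0$ gives $|h(x_0)|\le\norm{x_0}\sqrt{\ln d/d}$, so this arc length is $\tilde O(\norm{x_0}/\sqrt d)$, which is the claim --- provided the ball can indeed be taken that large.

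\textbf{Gradient lower bound.} I claim that if $x$ is a point of differentiability at which, for every layer $1\le j\le t-1$, at least $c_1 d_{j+1}$ of its neurons are active, then $\norm{\nabla h(x)}\ge c_2^{t}$. Let $D_j$ be the $0/1$ diagonal activation matrix of layer $j$ at $x$, with active set $S_j\subseteq[d_{j+1}]$; by the chain rule $\nabla h(x)=W_1^\top D_1W_2^\top D_2\cdots W_{t-1}^\top D_{t-1}W_t^\top$. Peel this product from the right: set $q_{t-1}:=D_{t-1}W_t^\top$ and $q_{j-1}:=D_{j-1}W_j^\top q_j$, so that $\nabla h(x)=W_1^\top q_1$. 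Applying $D_{j-1}$ merely restricts $W_j^\top q_j\in\reals^{d_j}$ to the coordinates $S_{j-1}$, and $(W_j^\top q_j)_{S_{j-1}}=\bigl((W_j)_{S_{j-1}}\bigr)^{\top}q_j$, where $(W_j)_{S_{j-1}}$ is the submatrix of $W_j$ formed by the columns in $S_{j-1}$. Since $|S_{j-1}|\ge c_1 d_j$, the $(c_1,c_2)$-surjectivity of $W_j$ makes $(W_j)_{S_{j-1}}$ be $c_2$-surjective, i.e.\ $\sigma_{\min}\bigl((W_j)_{S_{j-1}}\bigr)\ge c_2$, hence $\norm{q_{j-1}}\ge c_2\norm{q_j}$. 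The endpoints are handled the same way: $\norm{q_{t-1}}\ge c_2$ because the $1\times d_t$ row $W_t$ restricted to the $\ge c_1 d_t$ active columns $S_{t-1}$ has norm $\ge c_2$; and $\norm{W_1^\top q_1}\ge\sigma_{\min}(W_1)\norm{q_1}\ge c_2\norm{q_1}$ because $W_1$ is in particular $c_2$-surjective. Chaining these $t$ inequalities gives $\norm{\nabla h(x)}\ge c_2^{t}$, a constant.

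\textbf{Staying in the good region.} Because $\vec W$ is $(c_1,c_2)$-typical, every $W_j$ has spectral norm $\le 1/c_2$, so the layer-$j$ pre-activation $h_{W_1,\ldots,W_j}(\cdot)$ is $(1/c_2)^j$-Lipschitz. Hence if $\norm{x-x_0}\le\delta$, at most $(\delta/c_2^j)^2\cdot d/(\norm{x_0}^2c_2^2)$ coordinates of $h_{W_1,\ldots,W_j}(x)$ differ from those at $x_0$ by more than the margin $\norm{x_0}c_2/\sqrt d$; and by typicality of $x_0$ at least $2c_1 d_{j+1}$ neurons of layer $j$ are active at $x_0$ with exactly that margin. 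So at least $c_1 d_{j+1}$ of them stay active at $x$ as soon as $\delta^2\le c_1 d_{j+1}\norm{x_0}^2 c_2^{2j+2}/d$ for all $1\le j\le t-1$. Because the dimensions decrease and $t$ is constant the binding constraint is $j=t-1$, yielding a radius $\delta^\star=\Theta\bigl(\norm{x_0}\sqrt{d_t/d}\bigr)$; since $d_t=\omega(\ln d)$ this is $\omega\bigl(\norm{x_0}\sqrt{\ln d/d}\bigr)$, hence for $d$ large it strictly exceeds $|h(x_0)|/c_2^{t}\le\norm{x_0}\sqrt{\ln d/d}/c_2^{t}$. Together with the previous step (with $\alpha=c_2^{t}$) this proves the theorem.

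\textbf{Main obstacle.} The delicate point is that $h$ is only piecewise linear, so it is non-differentiable on the union of the activation hyperplanes and ``gradient flow'' must be read through the Clarke subdifferential $\partial h$ and the inclusion $\gamma'\in-\partial h(\gamma)$. The same estimate rescues this: at any $x$ in the good ball the margin-active neurons of $x_0$ remain strictly positive, so every activation pattern arising as a limit from a neighbouring linear cell still has $\ge c_1 d_{j+1}$ active neurons in each layer; the chain-of-surjectivity bound then gives $\norm{g}\ge c_2^{t}$ for every $g\in\partial h(x)$, so in particular $0\notin\partial h(x)$, the flow exists, is unique, and satisfies $\frac{d}{ds}h(\gamma(s))=-\norm{\gamma'(s)}^2$ with $\norm{\gamma'(s)}\ge c_2^{t}$ for a.e.\ $s$ --- all the arc-length estimate needs. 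The remaining ingredients (the chain-rule formula for $\nabla h$ on a linear cell, and the equivalence between $c$-surjectivity and $\sigma_{\min}\ge c$) are routine.
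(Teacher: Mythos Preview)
Your proof is correct and follows essentially the same strategy as the paper: both use the Lipschitzness of the layers together with an $\ell^2$-counting argument to show that, in a ball of radius $\tilde O(\|x_0\|/\sqrt d)$ around $x_0$, at least a $c_1$-fraction of the margin-active neurons in each layer remain active, and then use the $(c_1,c_2)$-surjectivity of each $W_j$ (equivalently $\sigma_{\min}\ge c_2$ on column submatrices) to obtain $\|\nabla h\|\ge c_2^{t}$ throughout that ball, from which the sign flip along gradient flow follows. Your treatment is somewhat more explicit (you derive the admissible radius $\delta^\star=\Theta(\|x_0\|\sqrt{d_t/d})$ rather than just asserting $\tilde O(\|x_0\|/\sqrt d)$ suffices) and you address the piecewise-linear nondifferentiability via the Clarke subdifferential, which the paper glosses over; but these are refinements of presentation, not a different argument.
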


\section{Proofs}

\subsection{Preliminaries and Notation}
We denote by $\sphere^{d-1}$ the unit sphere in $\reals^d$. 
For $x\in\reals^d$ and $r>0$ we denote by $B(x,r)$ the closed ball of radius $r$ around $x$.
An $\epsilon$-cover of a set $A\subset \reals^d$ is a set $S\subset A$ such that for any $x\in A$ there is $y\in S$ with $\|x-y\| < \epsilon$. 
We will use the following well known result for random Gaussian matrices:
\begin{theorem}[E.g. Corollary 5.35 in~\cite{vershynin2010introduction}]\label{thm:vershin}
Suppose that $W$ is a random $n\times m$ matrix with i.i.d. Gaussian 
entries of mean $0$ and variance $\frac{1}{d}$. Then, for every $t>0$, w.p. at least $1-2\exp\left(-\frac{dt^2}{2}\right)$
\[
\frac{\sqrt{m} - \sqrt{n}}{\sqrt{d}} - t \le s_{\min}(W) \le s_{\max}(W) \le  \frac{\sqrt{m} + \sqrt{n}}{\sqrt{d}} + t
\]
\end{theorem}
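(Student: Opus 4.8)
\textbf{Proof plan for Theorem~\ref{thm:vershin}.} The plan is to normalize to a standard Gaussian matrix and then split the estimate into a bound on the \emph{expected} extreme singular values, obtained from Gordon's Gaussian comparison inequality, and a bound on their \emph{fluctuations} around those means, obtained from Gaussian concentration of measure. First I would rescale: put $A := \sqrt{d}\,W$, so that $A$ has i.i.d.\ standard Gaussian entries and $s_{\min}(W) = s_{\min}(A)/\sqrt{d}$, $s_{\max}(W) = s_{\max}(A)/\sqrt{d}$. Setting $s := t\sqrt{d}$ and using $\exp(-dt^2/2)=\exp(-s^2/2)$, it suffices to show that with probability at least $1 - 2\exp(-s^2/2)$,
\[
\sqrt{m}-\sqrt{n}-s \;\le\; s_{\min}(A)\;\le\;s_{\max}(A)\;\le\;\sqrt{m}+\sqrt{n}+s.
\]
Replacing $A$ by $A^\top$ if needed (this changes neither singular value, and only strengthens the target lower bound since $\big|\sqrt m-\sqrt n\big|\ge\sqrt m-\sqrt n$), I may assume $A$ is an $N\times p$ matrix with $N:=\max(m,n)\ge p:=\min(m,n)$, so that $s_{\max}(A)=\max_{x\in\sphere^{p-1}}\max_{y\in\sphere^{N-1}}\inner{Ax,y}$ and $s_{\min}(A)=\min_{x\in\sphere^{p-1}}\max_{y\in\sphere^{N-1}}\inner{Ax,y}$.

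For the expectations, I would view $(x,y)\mapsto\inner{Ax,y}$ as a centered Gaussian process on $\sphere^{p-1}\times\sphere^{N-1}$ and compare it with the auxiliary process $(x,y)\mapsto\inner{g,x}+\inner{h,y}$, where $g\in\reals^p$ and $h\in\reals^N$ are independent standard Gaussian vectors. After computing the increment variance $\E\big(\inner{Ax,y}-\inner{Ax',y'}\big)^2=2-2\inner{x,x'}\inner{y,y'}$, the comparison hypotheses reduce to the elementary inequality $(1-\inner{x,x'})(1-\inner{y,y'})\ge0$ (the within-group increments of the two processes being equal). Sudakov--Fernique then gives $\E\,s_{\max}(A)\le\E\snorm{g}+\E\snorm{h}\le\sqrt p+\sqrt N=\sqrt m+\sqrt n$, and Gordon's min--max refinement gives $\E\,s_{\min}(A)\ge\E\snorm{h}-\E\snorm{g}\ge\sqrt N-\sqrt p=\big|\sqrt m-\sqrt n\big|\ge\sqrt m-\sqrt n$, the middle step using that $k\mapsto\sqrt k-\E\snorm{z_k}$ is nonincreasing for a standard Gaussian $z_k\in\reals^k$. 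For the fluctuations, both $A\mapsto s_{\max}(A)$ and $A\mapsto s_{\min}(A)$ are $1$-Lipschitz in the Frobenius norm, since $|s_\bullet(A)-s_\bullet(B)|\le\snorm{A-B}_{\mathrm{op}}\le\snorm{A-B}_F$; the Gaussian concentration inequality for $1$-Lipschitz functions of the $nm$ i.i.d.\ standard Gaussian entries of $A$ therefore yields $\Pr\big(s_{\max}(A)\ge\E\,s_{\max}(A)+s\big)\le e^{-s^2/2}$ and $\Pr\big(s_{\min}(A)\le\E\,s_{\min}(A)-s\big)\le e^{-s^2/2}$.

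Combining the two steps, $\Pr\big(s_{\max}(A)\ge\sqrt m+\sqrt n+s\big)\le e^{-s^2/2}$ and $\Pr\big(s_{\min}(A)\le\sqrt m-\sqrt n-s\big)\le e^{-s^2/2}$, so a union bound gives the displayed two-sided estimate with probability at least $1-2e^{-s^2/2}$; undoing the rescaling by $\sqrt d$ and substituting $s=t\sqrt d$ recovers the theorem. The main obstacle is Gordon's comparison inequality: it is the only genuinely non-elementary ingredient and is precisely what produces the sharp constants $\sqrt m\pm\sqrt n$ --- pure Sudakov--Fernique already handles the upper bound on $\E\,s_{\max}$, but the lower bound on $\E\,s_{\min}$ needs the min--max form, whose proof goes through a Gaussian interpolation argument in the spirit of Slepian's lemma. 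If one is content with constants of the form $C(\sqrt m+\sqrt n)$, Gordon can be traded for an $\varepsilon$-net argument over $\sphere^{p-1}\times\sphere^{N-1}$ together with a union bound over the net of the scalar Gaussian tail $\Pr\big(\inner{Ax,y}>\tau\big)\le e^{-\tau^2/2}$, but that does not reproduce the statement as worded.
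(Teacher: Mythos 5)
This theorem is not proved in the paper: it is imported verbatim from Vershynin's lecture notes (Corollary 5.35 in \cite{vershynin2010introduction}), so there is no ``paper's own proof'' to compare against. Your proof plan is a faithful reconstruction of the argument in that reference: rescale to a standard Gaussian matrix; bound the expectations of $s_{\max}$ and $s_{\min}$ via Sudakov--Fernique and Gordon's min--max comparison against the auxiliary process $\inner{g,x}+\inner{h,y}$ (with the correct increment computation $2-2\inner{x,x'}\inner{y,y'}$ versus $4-2\inner{x,x'}-2\inner{y,y'}$ reducing to $(1-\inner{x,x'})(1-\inner{y,y'})\ge 0$); then control deviations by Gaussian concentration for the $1$-Lipschitz (in Frobenius norm) maps $A\mapsto s_{\max}(A)$, $A\mapsto s_{\min}(A)$, and take a union bound. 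The transpose reduction to the tall case $N\ge p$ is sound and correctly noted to only strengthen the target lower bound when $n>m$.

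One step you state without justification is the monotonicity of $k\mapsto\sqrt k-\E\snorm{z_k}$, which is what promotes $\E\snorm h-\E\snorm g$ to the sharp $\sqrt N-\sqrt p$. This is true (it follows, e.g., from the identity $\E\snorm{z_k}\cdot\E\snorm{z_{k+1}}=k$ together with $\E\snorm{z_k}\le\sqrt k$, which yield $\E\snorm{z_{k+1}}-\E\snorm{z_k}=(k-(\E\snorm{z_k})^2)/\E\snorm{z_k}\ge\sqrt{k+1}-\sqrt k$), and it is also asserted without proof at the corresponding point of Vershynin's argument, so this is a reasonable level of detail for a proof sketch --- but it is the one place where a referee would ask you to fill in a line. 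Everything else is correct and is the standard route; the $\varepsilon$-net alternative you mention at the end is indeed the fallback that loses the sharp constants.
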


We will also use the following separation theorem for convex sets:
\begin{theorem}[E.g. Chapter 2 in~\cite{boyd2004convex}]\label{thm:convex_sep}
Let $C\subset\reals^d$ be a closed and convex set, and let $x\in \reals^d\setminus C$. There is a vector $u\in \sphere^{d-1}$ such that $\sup_{y\in C}u^\top y < u^\top x $ 
\end{theorem}

\subsection{Proof of Theorem \ref{thm:sujective} }

Let $W$ be a $k\times d$ matrix. For $A\subset [d]$ we denote by $W^A$ the matrix that is obtained from $W$ upon zeroing all entries $w_{ij}$ with $j\notin A$. Note that $W$ is {\em $(c_1,c_2)$-surjective} if and only if for every $A\subset [d]$ with $|A|\ge c_1d$, the matrix $W^A$ is $c_2$-surjective.



\begin{lemma}\label{lem:single_vec}
Fix a constant $1 > c_1 >0$. There is a positive constant $c'_2 = c'_2(c_1)$ for which the following holds.
Let $W$ be a random $k\times d$ weight matrix and let $y\in \sphere^{k-1}$. Then, w.p. $1 - 2^{-\Omega( d)}$ for every set $A$ of size at least $c_1d$
there is a vector $x\in \sphere^{d-1}$ such that $y^\top W^Ax \ge c'_2$.
\end{lemma}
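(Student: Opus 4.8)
The plan is to reduce the statement to a one–dimensional concentration estimate for a Gaussian vector. Set $z := W^\top y \in \reals^d$. Since $\norm{y}=1$ and the columns of $W$ are independent, the entries $z_1,\dots,z_d$ of $z$ are i.i.d.\ centered Gaussians (of variance $\tfrac1d$ under the normalization of Theorem~\ref{thm:sujective}; the argument is insensitive to the scale up to the value of $c'_2$). For $A\subseteq[d]$ write $z^A$ for the vector obtained from $z$ by zeroing the coordinates outside $A$. Then $(W^A)^\top y = z^A$, hence
\[
\sup_{x\in\sphere^{d-1}} y^\top W^A x \;=\; \sup_{x\in\sphere^{d-1}} \inner{z^A,x} \;=\; \norm{z^A} \;=\; \Big(\sum_{j\in A} z_j^2\Big)^{1/2},
\]
with the supremum attained at $x = z^A/\norm{z^A}$ whenever $z^A\neq 0$. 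So it suffices to prove that, with probability $1-2^{-\Omega(d)}$, $\sum_{j\in A} z_j^2 \ge (c'_2)^2$ simultaneously for all $A$ with $|A|\ge c_1 d$.

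The key observation is that the worst case over $A$ is explicit: among sets of a fixed size, $\sum_{j\in A}z_j^2$ is minimized by taking the indices with the smallest $|z_j|$, and enlarging $A$ only increases the sum; hence it is enough to lower bound the sum of the $\lceil c_1 d\rceil$ smallest values among $z_1^2,\dots,z_d^2$. Fix a constant $\epsilon>0$, depending only on $c_1$, small enough that $p:=\Pr[\,|\sqrt d\,z_1|\le \epsilon\,]\le c_1/4$; this is possible since $\Pr[\,|N(0,1)|\le\epsilon\,]\to 0$ as $\epsilon\to 0$. Let $M := \big|\{\,j : |z_j|\le \epsilon/\sqrt d\,\}\big|$, a sum of $d$ i.i.d.\ indicators with $\E M = pd \le c_1 d/4$. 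A standard Chernoff bound for sums of i.i.d.\ Bernoulli variables gives $\Pr[\,M\ge c_1 d/2\,]\le \Pr[\,M-\E M\ge c_1 d/4\,]\le 2^{-\Omega(d)}$.

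On the complement of that event, $M< c_1 d/2$, so every $A$ with $|A|\ge c_1 d$ contains at least $|A|-M \ge c_1 d/2$ indices $j$ with $|z_j|>\epsilon/\sqrt d$, whence $\sum_{j\in A} z_j^2 \ge (c_1 d/2)\cdot(\epsilon/\sqrt d)^2 = c_1\epsilon^2/2$. Taking $c'_2 := \epsilon\sqrt{c_1/2}$, a constant depending only on $c_1$, completes the argument. I want to stress that no union bound over the exponentially many sets $A$ is needed, because the bad event has been rephrased purely in terms of the scalar count $M$; this is the point that makes the $2^{-\Omega(d)}$ failure probability painless. The only mildly delicate step is choosing $\epsilon$ and keeping track of the variance normalization so that $c'_2$ is genuinely independent of $d$; beyond that the proof is a direct Chernoff estimate, so I do not anticipate a real obstacle.
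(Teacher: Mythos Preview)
Your proof is correct and follows essentially the same route as the paper: reduce $\sup_x y^\top W^A x$ to $\|z^A\|$ for $z=W^\top y$, note that the worst $A$ picks the smallest $|z_j|$'s, and then use a threshold-plus-Chernoff/Hoeffding count to show that at most $c_1 d/2$ coordinates can fall below $\epsilon/\sqrt d$. The paper factors the last step out as a separate lemma (Lemma~\ref{lem:sum_of_min_elements}) on the sum of the $\lceil c_1 d\rceil$ smallest order statistics of $X_1^2,\dots,X_d^2$, whereas you inline it, but the argument is the same.
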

\begin{proof}
We note that the vector $y^\top W$ is a vector of $d$ independent and centered Gaussians, of variance $\frac{1}{\sqrt{d}}$. Hence, lemma \ref{lem:sum_of_min_elements} below implies that for sufficiently small constant $c'_2> 0$, it holds that w.p. $1-2^{-\Omega(d)}$ the sum of the squares of the smallest $\lceil c_1d\rceil$ elements in $y^\top W$ is at least $c'^2_2$.
In this case, for every set $A$ of size at least $c_1d$, $\|y^\top W^A\| \ge c'_2$, which implies that there is a vector $x\in \sphere^{d-1}$ such that $y^\top W^Ax \ge c'_2$.
\end{proof}

\begin{lemma}\label{lem:sum_of_min_elements}
Fix a constant $c_1>0$. There is a constant $c_2>0$ for which the following holds.
Let $X_1,\ldots,X_d$ i.i.d. standard Gaussian and denote by $Z$ the sum of the smallest $\lceil c_1d\rceil$ elements in $X^2_1,\ldots,X^2_d$. Then $\Pr\left( Z > c_2d \right) = 1- 2^{-
\Omega(d)}$
\end{lemma}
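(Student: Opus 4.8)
The plan is to fix a small constant threshold $\tau=\tau(c_1)>0$, count the number $N:=|\{i\in[d]:X_i^2<\tau\}|$ of ``tiny'' entries, and exploit the observation that the smallest $\lceil c_1 d\rceil$ of the values $X_1^2,\dots,X_d^2$ consist of these (at most) $N$ tiny entries together with at least $\lceil c_1 d\rceil-N$ entries that are each $\ge\tau$. Hence $Z\ge(\lceil c_1 d\rceil-N)\tau$, and it suffices to show $N<c_1 d/2$ with probability $1-2^{-\Omega(d)}$, which gives $Z>\tfrac{c_1\tau}{2}\,d=:c_2 d$. (We may assume $c_1\le 1$; for $c_1>1$ the quantity $Z$ is the full sum $\sum_i X_i^2$, and the claim is just standard concentration of a $\chi^2$ variable.)

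To choose $\tau$: since the standard Gaussian density is bounded by $\tfrac{1}{\sqrt{2\pi}}$, we have $\Pr(X_i^2<\tau)=\Pr(|X_i|<\sqrt\tau)\le\tfrac{2}{\sqrt{2\pi}}\sqrt\tau$, so taking $\tau$ to be a small enough constant depending only on $c_1$ makes $p:=\Pr(X_i^2<\tau)\le c_1/4$. Then $N$ is a sum of $d$ i.i.d.\ Bernoulli$(p)$ indicators with mean $pd\le c_1 d/4$, and Hoeffding's inequality gives
\[
\Pr\big(N\ge c_1 d/2\big)\;\le\;\Pr\big(N\ge \E N+c_1 d/4\big)\;\le\;\exp\!\big(-c_1^2 d/8\big)\;=\;2^{-\Omega(d)},
\]
with the hidden constant depending only on $c_1$.

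Finally, on the event $\{N<c_1 d/2\}$ one checks directly from the definition of order statistics that all $N$ indices with $X_i^2<\tau$ lie among the $\lceil c_1 d\rceil$ smallest (since $N<c_1 d/2<\lceil c_1 d\rceil$), so the other $\lceil c_1 d\rceil-N>c_1 d/2$ of those smallest values are each $\ge\tau$; thus $Z>\tfrac{c_1\tau}{2}\,d$, completing the argument with $c_2:=\tfrac{c_1\tau}{2}$.

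I do not anticipate a real obstacle. The only step needing slight care is the large-deviation bound: one needs the deviation level $c_1 d/2$ to exceed the mean $\E N\le c_1 d/4$ by a constant multiple of $d$, which holds by construction. If one prefers a multiplicative Chernoff bound to Hoeffding, it suffices to shrink $\tau$ a bit further (so that, say, $p\le c_1/8$ and hence $c_1 d/2\ge e\,\E N$) to reach the same $2^{-\Omega(d)}$ conclusion. Everything else is bookkeeping about order statistics versus threshold counts.
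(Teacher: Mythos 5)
Your proof is correct and follows essentially the same strategy as the paper's: pick a threshold so that the count of ``small'' (equivalently, ``large'') squared Gaussians concentrates via Hoeffding, then observe that on the good event, among the $\lceil c_1 d\rceil$ smallest values at least $c_1 d/2$ must exceed the threshold, giving $Z \ge \frac{c_1\tau}{2}d$. The paper phrases it by counting indices with $X_i^2 > c_2'$ rather than your $X_i^2 < \tau$, but this is the same argument up to complementation.
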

\begin{proof}
Let $1>p>0$ be big enough such that if $Y_1,\ldots,Y_d$ are i.i.d. Bernoulli r.v. with parameter $p$, then $\Pr\left(\sum_{i=1}^d Y_i > \left(1-\frac{c_1}{2}\right)d \right) =  1 - 2^{-
\Omega(d)}$. Such $p$ exists by, say, Hoeffding's inequality.
Let $c'_2>0$ be small enough such that $\Pr_{X\sim \cn(0,1)}\left(X^2 > c'_2\right) > p$. By the choice of $c'_2$ and $p$ it holds that w.p. $1-2^{-
\Omega(d)}$ over the choice of $X_1,\ldots,X_d$ we have that $X_i^2 > c'_2$ for more than $\left(1-\frac{c_1}{2}\right)d$ $i$'s. In this case, amongst the smallest $\lceil c_1d\rceil$ elements in $X^2_1,\ldots,X^2_d$, there are at least $\frac{c_1}{2}d$ elements with $X_i^2 > c'_2$, in which case $Z > \frac{c_1c'_2}{2}d$. All in all, we have shown that for $c_2=\frac{c_1c'_2}{2}$, $\Pr\left(Z >  c_2d\right) = 1 - 2^{-
\Omega(d)}$
\end{proof}

Via a union bound and the fact that $\sphere^{k-1}$ has an $\epsilon$-cover of size $\left(1/\epsilon\right)^{O(k)}$ (e.g. chapter 5 in \cite{van2014probability}) we conclude that:
\begin{corollary}\label{cor:contain_eps_cover}
Fix a constant $1 > c_1 >0$ and let $c'_2=c'_2(c_1)$ be the constant from lemma \ref{lem:single_vec}. 
Let $S$ be a $\frac{c'_2}{7}$-cover of $\sphere^{k-1}$ of size $2^{O(k)}$. There is a positive constant $c_3 = c_3(c_1)$ for which the following holds. 
Let $W$ be a random $k\times d$ weight matrix with $k\le c_3d$.
Then, w.p. $1 - 2^{-\Omega( d)}$, for every set $A$ of size at least $c_1d$, and for every $y\in S$,
there is a vector $x\in \sphere^{d-1}$ such that $y^\top W^Ax \ge c'_2$.
\end{corollary}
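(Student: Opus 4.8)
The plan is to combine Lemma \ref{lem:single_vec} with a union bound over the elements of the cover $S$. First I fix the constant $c_1$ and let $c_2' = c_2'(c_1)$ be the constant produced by Lemma \ref{lem:single_vec}. Recall that $\sphere^{k-1}$ admits an $\epsilon$-cover of size $(1/\epsilon)^{O(k)}$ for any $\epsilon > 0$; applying this with $\epsilon = c_2'/7$ gives a cover $S$ of size $(7/c_2')^{O(k)} = 2^{O(k)}$, where the hidden constant in the exponent depends only on $c_1$. Fix one such $S$.

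Next, for each fixed $y \in \sphere^{k-1}$, Lemma \ref{lem:single_vec} says that with probability $1 - 2^{-\Omega(d)}$ over the draw of $W$, the following ``good event'' $E_y$ holds: for every set $A \subseteq [d]$ with $|A| \ge c_1 d$ there exists $x \in \sphere^{d-1}$ with $y^\top W^A x \ge c_2'$. I want $E_y$ to hold simultaneously for all $y \in S$. By the union bound,
\[
\Pr\!\left( \bigcap_{y \in S} E_y \right) \ge 1 - |S| \cdot 2^{-\Omega(d)} = 1 - 2^{O(k)} \cdot 2^{-\Omega(d)}.
\]
The exponent $\Omega(d)$ in Lemma \ref{lem:single_vec} is an absolute positive constant times $d$ (tracing back through Lemma \ref{lem:sum_of_min_elements} and Hoeffding, it depends only on $c_1$), while the exponent $O(k)$ from $|S|$ also has its constant depending only on $c_1$. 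Hence there is a constant $c_3 = c_3(c_1) > 0$ small enough that whenever $k \le c_3 d$, we have $2^{O(k)} \cdot 2^{-\Omega(d)} = 2^{-\Omega(d)}$: choosing $c_3$ so that $c_3 \cdot (\text{the } O(k)\text{-constant})$ is strictly smaller than the $\Omega(d)$-constant makes the $d$-term dominate. This is the quantitative heart of the argument — making sure the two constants from the two lemmas combine favorably — but it is entirely routine given that both depend only on $c_1$.

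The only mild subtlety, and the step I would be most careful about, is bookkeeping the dependence of all the hidden constants on $c_1$ and confirming they are independent of $d$ and $k$, so that the final probability bound $1 - 2^{-\Omega(d)}$ is genuine and $c_3$ is well-defined. (Note the statement of the corollary only asks for the conclusion to hold for \emph{one} such cover $S$ of size $2^{O(k)}$, not all of them, so no further union over covers is needed.) Once that is verified, on the intersection event $\bigcap_{y \in S} E_y$ the claimed conclusion holds verbatim: for every $A$ with $|A| \ge c_1 d$ and every $y \in S$ there is $x \in \sphere^{d-1}$ with $y^\top W^A x \ge c_2'$. This completes the proof; the leftover distance $c_2'/7$ between the cover and the sphere is not used here but is evidently being set up so that a subsequent approximation argument (covering a general $y \in \sphere^{k-1}$ by a nearby $y' \in S$ and controlling the error via the spectral norm of $W^A$) can upgrade this to full $c_2$-surjectivity in the proof of Theorem \ref{thm:sujective}.
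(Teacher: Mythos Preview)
Your proposal is correct and follows exactly the approach the paper uses: apply Lemma \ref{lem:single_vec} to each $y$ in the $\frac{c'_2}{7}$-cover $S$ of size $2^{O(k)}$, take a union bound, and choose $c_3$ small enough that $2^{O(k)}\cdot 2^{-\Omega(d)} = 2^{-\Omega(d)}$ whenever $k\le c_3 d$. The paper states this in a single sentence; your write-up simply fills in the bookkeeping.
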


\begin{lemma}\label{lem:cover_to_ball}
Let $S$ be a $\frac{c'_2}{7}$-cover of $\sphere^{k-1}$, and let $C\subset\reals^k$ be a closed and convex set such that (1) for any $y\in S$ there is $x\in C$ such that $y^\top x\ge c'_2$, and (2) $C$ is contained in the ball of radius $3$.
Then $C$ contains the ball of radius $\frac{c'_2}{2}$ around zero.
\end{lemma}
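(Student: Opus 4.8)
The plan is to argue by contradiction using the separation theorem for convex sets (Theorem \ref{thm:convex_sep}). Suppose that $C$ does not contain the ball of radius $\frac{c'_2}{2}$ around zero, so there is a point $z\in\reals^k$ with $\|z\|\le \frac{c'_2}{2}$ and $z\notin C$. Since $C$ is closed and convex, Theorem \ref{thm:convex_sep} provides a unit vector $u\in\sphere^{k-1}$ with $\sup_{y\in C}u^\top y < u^\top z$. As $\|u\|=1$, we have $u^\top z\le\|z\|\le\frac{c'_2}{2}$, and hence
\[
\sup_{y\in C}u^\top y < \frac{c'_2}{2}.
\]

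For the reverse bound I would exploit the cover together with assumptions (1) and (2). Since $u\in\sphere^{k-1}$ and $S$ is a $\frac{c'_2}{7}$-cover of the sphere, pick $y'\in S$ with $\|u-y'\|<\frac{c'_2}{7}$. By assumption (1) there is $x\in C$ with $y'^\top x\ge c'_2$, and by assumption (2) we have $\|x\|\le 3$. Therefore
\[
u^\top x = y'^\top x + (u-y')^\top x \ \ge\ c'_2 - \|u-y'\|\,\|x\| \ >\ c'_2 - \frac{c'_2}{7}\cdot 3 \ =\ \frac{4c'_2}{7}.
\]
Since $x\in C$, this gives $\sup_{y\in C}u^\top y \ge \frac{4c'_2}{7} > \frac{c'_2}{2}$, contradicting the previous display. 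Hence no such $z$ exists, and $C$ contains the ball of radius $\frac{c'_2}{2}$ around zero.

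There is no genuine obstacle here; the only thing to get right is the bookkeeping of constants. The cover radius $\frac{c'_2}{7}$ and the containment radius $3$ from assumption (2) are tuned precisely so that $c'_2 - 3\cdot\frac{c'_2}{7} = \frac{4c'_2}{7}$ strictly exceeds $\frac{c'_2}{2}$, which is exactly what makes the two bounds on $\sup_{y\in C}u^\top y$ incompatible. The one modelling choice worth flagging is to establish the containment by separating a hypothetical missing point, rather than by constructing, for each target point of norm at most $\frac{c'_2}{2}$, an explicit convex combination of the witnesses $x$ supplied by assumption (1); the separation route sidesteps that construction and keeps the proof to a few lines.
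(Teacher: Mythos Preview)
Your proof is correct and follows essentially the same route as the paper: both argue by contradiction, separate a hypothetical missing point via Theorem~\ref{thm:convex_sep}, pick a nearby cover point, and compare the resulting inner-product bounds using assumptions (1) and (2). The only cosmetic difference is that the paper upper-bounds $y'^\top x$ starting from $u^\top x<\frac{c'_2}{2}$, whereas you lower-bound $u^\top x$ starting from $y'^\top x\ge c'_2$; the arithmetic is the same.
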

\begin{proof}
Assume toward a contradiction that there is a vector $x_0$ with $\|x_0\| \le \frac{c'_2}{2}$ such that $x_0\notin C$. By the separation theorem for convex sets (theorem \ref{thm:convex_sep}), there is a unit vector $u\in\sphere^{d-1}$ such that for any $x\in C$
\begin{equation}\label{eq:sep_thm_consequnce}
u^\top x < u^\top x_0 \le \|u\| \cdot \|x_0\| \le \frac{c'_2}{2}
\end{equation}
Now, choose $y\in S$ that satisfies $\|y-u\|\le \frac{c'_2}{7}$, as well as $x\in C$ such that $y^\top x\ge c'_2$.
We have
\[
y^\top x = u^\top x + (y-u)^\top x \le \frac{c'_2}{2} + \|y-u\|\cdot \|x\| \le \frac{c'_2}{2} + \frac{c'_2}{7}\cdot 3 < c'_2
\]
contradicting the assumption that $y^\top x\ge c_2$
\end{proof}

\begin{proof}(of theorem \ref{thm:sujective})
Let $c'_2= c'_2(c_1)$ and $c_3 = c_3(c_1)$ be the constants from corollary \ref{cor:contain_eps_cover}. Define $c_2 = \frac{c'_2}{2}$. Let $S$ be an $\frac{c_2'}{7}$-cover of $\sphere^{k-1}$.
Let $W$ be a random $k\times d$ weight matrix with $k\le c_3d$.
By corollary \ref{cor:contain_eps_cover} and theorem \ref{thm:vershin} we have that w.p. $1-2^{-\Omega(d)}$:
\begin{enumerate}
    \item For every set $A\subset [d]$ of size $\ge c_1d$ and every $y\in S$, there is $x\in W^A\ball^d$ with $y^\top x\ge c'_2$
    \item $W^A\ball^d$ is contained in the ball of radius $3$ around $0$
\end{enumerate}
Lemma \ref{lem:cover_to_ball} implies that $W^A\ball^d$ contains the ball of radius $c_2$ around $0$, and hence $W^A$ is $c_2$-surjective.
As this is true for any $A\subset [d]$ of size $\ge c_1d$, it follows that $W$ is $(c_1,c_2)$-surjective.
\end{proof}

\subsection{Proof of theorem \ref{thm:main_for_surjective}}

W.l.o.g. we assume that $\|x_0\| = \sqrt{d}$. Let $W_i^x$ be the matrix obtained form $W_i$ by replacing each column $j$ corresponding to a neuron that is ``off" (that is, their value is $0$) with $0$. We have that
\[
h_{\vec{W}}(x) = W^x_t\cdot W^x_{t-1}\cdot\ldots\cdot W^x_1 x  
\]
hence, the gradient of $h$ at $x$ is $ W^x_t\cdot W^x_{t-1}\cdot\ldots\cdot W^x_1$. 

Now, fix $x\in B(x_0,c^{-t}_2\sqrt{\ln(d)})$. Since each layer computes a function which is $O(1)$-Lipschitz (as the spectral norm of the weight matrices is $O(1)$), and since there are $O(1)$ layers, we have that the norm of the input vector for each layer changes by at most $O(\sqrt{\ln(d)})$  when moving from $x_0$ to $x$. In particular, at most $O(\log(d))$ neurons whose input value is $\ge c_2$ for $x_0$, become inactive when we move to $x$. Hence, the number of active neurons for $x$ at layer $i$ is at least $2c_1d_i - O(\ln(d))$, which is more 
than $c_1d_i$ as  $d_t = \omega(\ln(d))$ 
(note that $d_i\ge d_t$. Indeed, since the matrices are surjective, we have that $d_t\le d_{t-1}\le\ldots\le d_1$).

Since the weight matrices are $(c_1,c_2)$-surjective, we have that $W_i^x$ is $c_2$-surjective for any $x\in B(x_0,c^{-t}_2\sqrt{\ln(d)})$. As the composition of $t$ $c_2$-surjective matrices is $c_2^t$ surjective, we have that the gradient $W^x_t\cdot W^x_{t-1}\cdot\ldots\cdot W^x_1$ is $c_2^t$-surjective. As the gradient is a vector, this means that\footnote{Note that a vector $x$ is $c$-surjective if and only if $\|x\|\ge c$.} $\|\nabla h_{\vec{W}}(x)\|\ge c_2^t$. 

All in all, we have shown that for any $x\in B(x_0,c^{-t}_2\sqrt{\ln(d)})$, the gradient of $h_{\vec{W}}$ at $x$ has norm at least $ c_2^t$. Assuming that $\sign(h_{\vec{W}}(x))=1$ (respectively, $\sign(h_{\vec{W}}(x))=-1$), this implies that gradient flow starting at $x_0$ for length of $c^{-t}_2\sqrt{\ln(d)}$ will decrease (respectively, increase) the output of the network by at least $\sqrt{\ln(d)}$, which means that the output will change its sign.

\subsection{Proof of theorem \ref{thm:ranodm_weights}}

\begin{lemma}\label{lem:typical_examples}
Assume that $d_t= \omega(1)$.
For any constant $c_1<\frac{1}{4}$ and for any example $x\in\reals^d$, if $\vec{W}$ are normalized random weights then w.h.p. over the choice of $\vec{W}$, $x$ is $(c_1,\Omega(1))$-typical  w.r.t. $\vec{W}$
\end{lemma}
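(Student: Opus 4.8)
The plan is to verify the two defining conditions of a $(c_1,\Omega(1))$-typical example separately, each holding with high probability over the choice of $\vec{W}$; a union bound over the $t = O(1)$ layers then finishes the argument. Fix the example $x$ and normalize so that $\|x\| = \sqrt{d}$ (by homogeneity of ReLU this is w.l.o.g., since scaling $x$ scales every pre-activation and the output linearly, and the conditions are stated in terms of $\|x\|$). I will process the network layer by layer, tracking the (random) input vector $v_i := \sigma(h_{W_1,\ldots,W_{i-1}}(x))$ to layer $i$ and its norm.

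The first step is to control the norms $\|v_i\|$. For a normalized random $d_{i+1}\times d_i$ matrix $W_i$ and a fixed input $v_{i-1}$, each coordinate of $W_i v_{i-1}$ is a centered Gaussian with variance $\|v_{i-1}\|^2/d_i$, so $\E\|\sigma(W_i v_{i-1})\|^2 = \tfrac12\|v_{i-1}\|^2\cdot\frac{d_{i+1}}{d_i}\cdot\frac{1}{1}$ — wait, more precisely $\E\|W_i v_{i-1}\|^2 = \|v_{i-1}\|^2 \cdot d_{i+1}/d_i \cdot d_i/d$; I will just use a clean concentration statement: conditioned on $v_{i-1}$, with probability $1 - 2^{-\Omega(d_{i+1})}$ the norm $\|\sigma(W_i v_{i-1})\|$ is within a constant factor of $\sqrt{d_{i+1}}\cdot\|v_{i-1}\|/\sqrt{d_i}$, by standard Gaussian concentration (e.g.\ Theorem \ref{thm:vershin} applied to the rank-one situation, or a direct Bernstein bound on a sum of $d_{i+1}$ i.i.d.\ sub-exponential terms). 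Chaining these bounds across the $t = O(1)$ layers, with a union bound, gives that w.h.p.\ $\|v_i\| = \Theta(\sqrt{d_i})$ for every $i$, where the implied constants depend only on $t$.

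For condition (1), condition on the event above and on $v_{i-1}$ with $\|v_{i-1}\| = \Theta(\sqrt{d_{i-1}})$. The pre-activations at layer $i$ are $d_i$ coordinates, each a centered Gaussian of standard deviation $\|v_{i-1}\|/\sqrt{d} = \Theta(\sqrt{d_{i-1}/d}) \ge \Theta(1)$ — actually here I want them to be $\Omega(\|x\|/\sqrt d) = \Omega(1)$, and indeed $\|v_{i-1}\| = \Theta(\sqrt{d_{i-1}}) \ge \Theta(\sqrt{d_i})$, so each has standard deviation $\Omega(1)$. The probability that a single centered Gaussian of standard deviation $\Omega(1)$ exceeds $c_2 = \Theta(1)$ (for a suitably small constant $c_2$) is some constant $q > \tfrac12$; since $c_1 < \tfrac14$ we can pick $c_2$ small enough that $q > 2c_1$. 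The coordinates of $W_i v_{i-1}$ are independent given $v_{i-1}$, so by Hoeffding's inequality at least a $2c_1$ fraction of them exceed $\tfrac{\|x\|c_2}{\sqrt d}$ except with probability $2^{-\Omega(d_i)} = 2^{-\omega(1)} = o(1)$. Union bounding over the $t = O(1)$ layers gives condition (1) w.h.p.

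For condition (2), note $h_{\vec W}(x) = W_t v_t$ is, conditioned on $v_t$ with $\|v_t\| = \Theta(\sqrt{d_t})$, a single centered Gaussian of standard deviation $\|v_t\|/\sqrt d = \Theta(\sqrt{d_t/d}) = \Theta(\|x\|/\sqrt d \cdot \sqrt{d_t}/\sqrt{d_t})$ — i.e.\ $\Theta(\|x\|/\sqrt d)$ up to the $\sqrt{d_t}$ factor, so in fact of order $\|x\|\sqrt{d_t}/d$; since a Gaussian exceeds $C$ times its standard deviation with probability $2e^{-C^2/2}$, taking $C = \Theta(\sqrt{\ln d})$ shows $|h_{\vec W}(x)| \le \|x\|\sqrt{\ln(d)/d}$ except with probability $d^{-\Omega(1)} = o(1)$. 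The main obstacle — and the only delicate point — is that the events I condition on at layer $i$ (good norm and active-neuron count) depend on $W_1,\ldots,W_{i-1}$, so I must be careful that the conditional distribution of $W_i$ is still the unconditional product-Gaussian distribution; this holds because $W_i$ is independent of $W_1,\ldots,W_{i-1}$, so conditioning on any event in the earlier matrices leaves $W_i$ untouched. I would therefore present the argument as an explicit induction on $i$, carrying the statement ``$\|v_i\| = \Theta(\sqrt{d_i})$'' and establishing conditions (1), (2) along the way, with all failure probabilities summed at the end using $d_t = \omega(1)$ to make each $2^{-\Omega(d_i)}$ or $d^{-\Omega(1)}$ term $o(1)$.
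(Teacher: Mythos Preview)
Your plan matches the paper's sketch exactly: normalize to $\|x\|=\sqrt{d}$, propagate a norm invariant for the post-ReLU vectors layer by layer using independence of $W_i$ from earlier layers, deduce from this that the pre-activations at each layer are i.i.d.\ centered Gaussians of $\Theta(1)$ standard deviation so a constant fraction exceed a small constant (condition (1)), and finally that the scalar output is a $\Theta(1)$-scale Gaussian so is at most $\sqrt{\ln d}$ w.h.p.\ (condition (2)).

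There is, however, a recurring arithmetic slip you should fix before this becomes a proof. The paper's ``normalized'' convention means the $d_{i+1}\times d_i$ matrix $W_i$ has entries of variance $1/d_i$, i.e.\ one over the \emph{layer's} input dimension, not $1/d$. You use this correctly once (``variance $\|v_{i-1}\|^2/d_i$'') but then repeatedly write standard deviation $\|v\|/\sqrt{d}$, which with $\|v\|=\Theta(\sqrt{d_i})$ gives $\Theta(\sqrt{d_i/d})=o(1)$ and would make condition~(1) fail. With the correct divisor $\sqrt{d_i}$ the standard deviation is $\Theta(1)$ at every layer and both conditions follow exactly as you outline. Also clean up the indexing: if $v_i:=\sigma(h_{W_1,\ldots,W_{i-1}}(x))\in\reals^{d_i}$ is the input to layer $i$, then the relevant product is $W_i v_i$, not $W_i v_{i-1}$; the pre-activations at layer $i$ have $d_{i+1}$ coordinates, not $d_i$; and the final output is $W_t v_t$ with $\|v_t\|=\Theta(\sqrt{d_t})$, giving a Gaussian of standard deviation $\Theta(1)$, hence $|h_{\vec W}(x)|\le \sqrt{\ln d}$ w.h.p.
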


\begin{proof}(sketch)
W.l.o.g. we assume that $\|x\| = \sqrt{d}$.
By standard concentration results it holds that w.p. $1-o(1)$ we have that $\|\sigma(h_{W_1,\ldots,W_{i-1}}(x))\| = \Theta(1)$. 
Now, given weight matrices $W_1,\ldots, W_{i-1}$ such that $\|\sigma(h_{W_1,\ldots,W_{i-1}}(x))\| = \Theta(1)$, we have that $h_{W_1,\ldots,W_{i}}(x)$ is a vector of i.i.d centered Gaussians of variance $\Theta(1)$. Hence, by standard concentration results we have that\footnote{Note again that the assumption that $d_t = \omega(1)$ implies that $d_i = \omega(1)$ for all $i$.} w.p. $1-2^{-\Omega(d_{i+1})} = 1-o(1)$ over the choice of $W_i$, the value of $2c_1$ fraction of the coordinates in $h_{W_1,\ldots,W_{i}}(x)$ is $\Omega(1)$.
Similarly, given $W_1,\ldots, W_{t-1}$ such that $\|\sigma(h_{W_1,\ldots,W_{t-1}}(x))\| = \Theta(1)$, we have that $h_{\vec{W}}(x)$ is a centered Gaussian of variance $\Theta(1)$. Hence, w.p. $1-o(1)$, $|h_{\vec{W}}(x)|\le \sqrt{\log(d)}$
\end{proof}

\begin{proof}(of theorem \ref{thm:ranodm_weights})
W.l.o.g. we assume that $\|x_0\| = \sqrt{d}$.
By lemma \ref{lem:typical_examples} and theorem \ref{thm:sujective} we have that w.p. $1-o(1)$ the weights $\vec{W}$ are $(1/5,\Omega(1))$ typical, and that $x_0$ is $(1/5,\Omega(1))$ typical w.r.t $\vec{W}$. Theorem \ref{thm:main_for_surjective} therefore implies that w.p. $1-o(1)$ gradient flow of length $O\left(\sqrt{\log(d)}\right)$ will flip the sign of the network.
\end{proof}


\section{An Experiment}
We made a small experiment on the MNIST data set (see \url{https://github.com/hadasdas/L2AdversarialPerturbations}). We 
normalized the examples to have a norm of $\sqrt{784}$ (784 is the dimension of the examples), and trained networks of depth 2-8, 
with 100 neurons at every hidden layer. We modified the 
classification task so that the network was trained to distinguish even from odd digits. We then sampled 1000 examples and sought 
adversarial example for each of them using GD. Figure \ref{fig:1} 
shows the histogram and average of the distances in which the adversarial 
examples were found. 

Note that in this settings, $\frac{1}{\sqrt{\|x\|}} = 1$. As Figure \ref{fig:1} demonstrates, for most examples we were able to find an adversarial perturbation at a distance of a few units.


\begin{figure}\label{fig:1}
    \includegraphics[width=.24\textwidth]{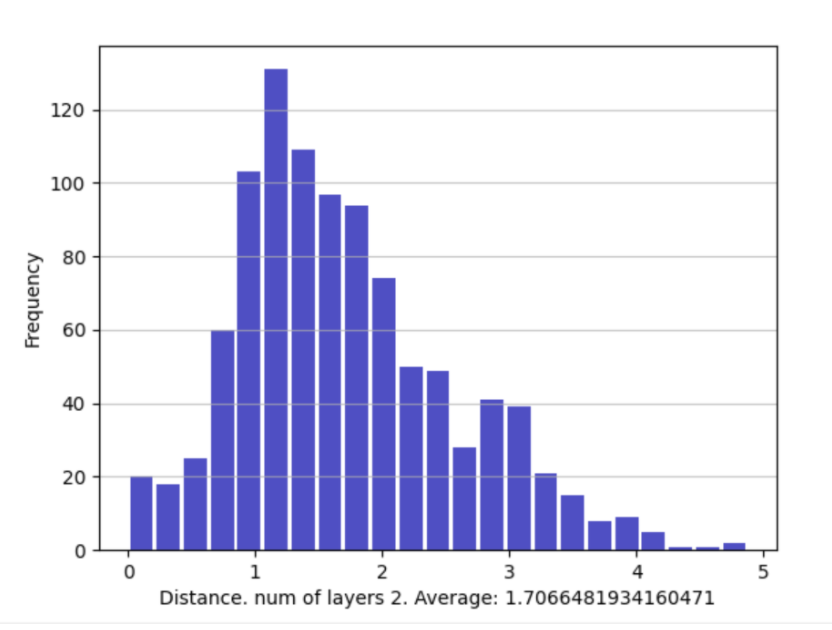}\hfill
    \includegraphics[width=.24\textwidth]{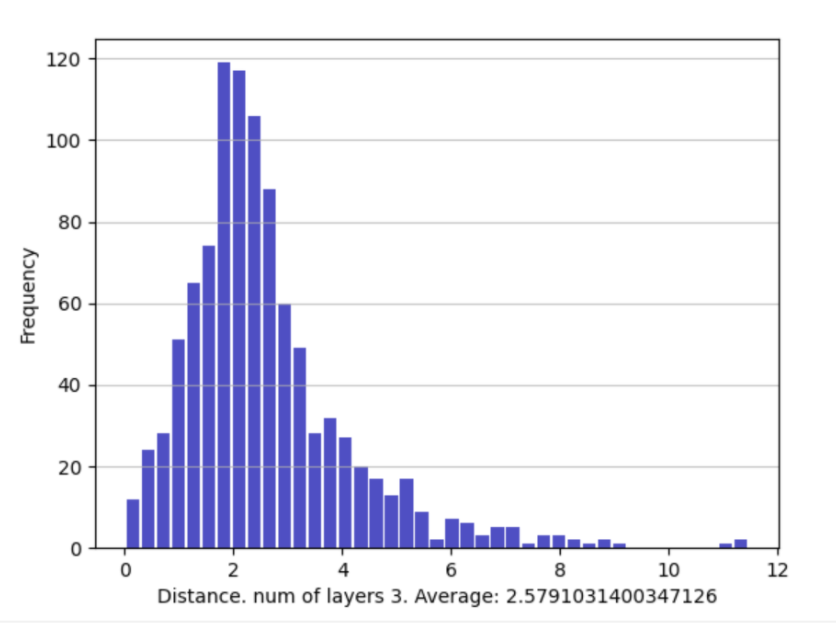}\hfill
    \includegraphics[width=.24\textwidth]{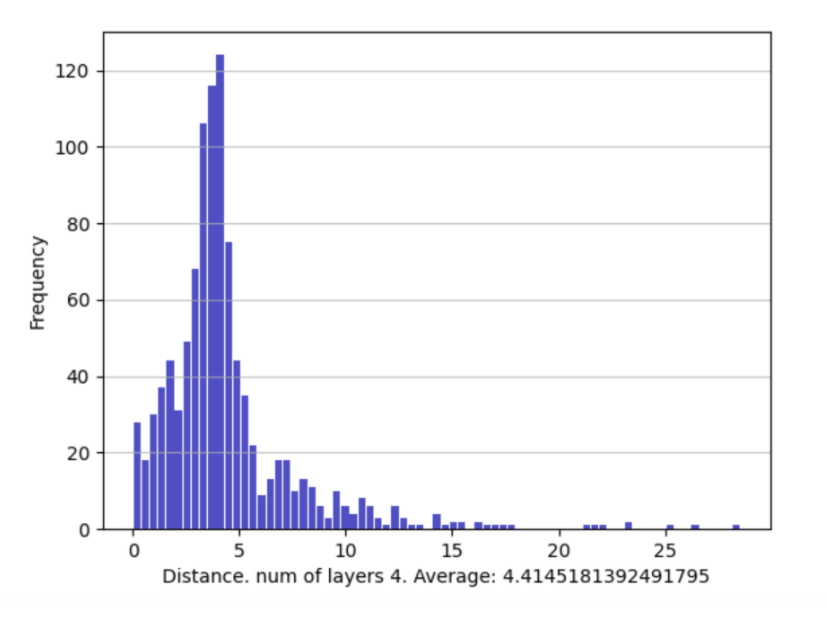}\hfill
    \includegraphics[width=.24\textwidth]{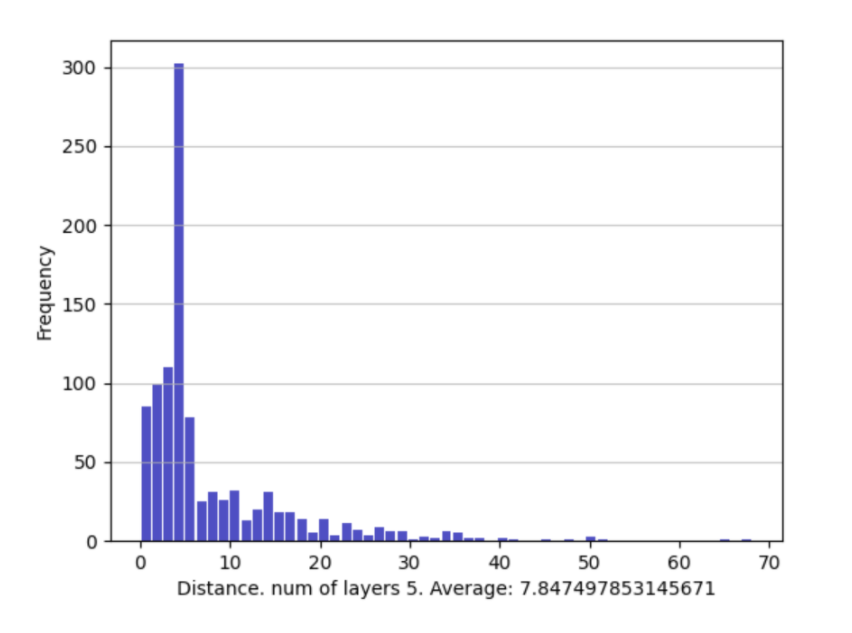}
    \\[\smallskipamount]
    \includegraphics[width=.24\textwidth]{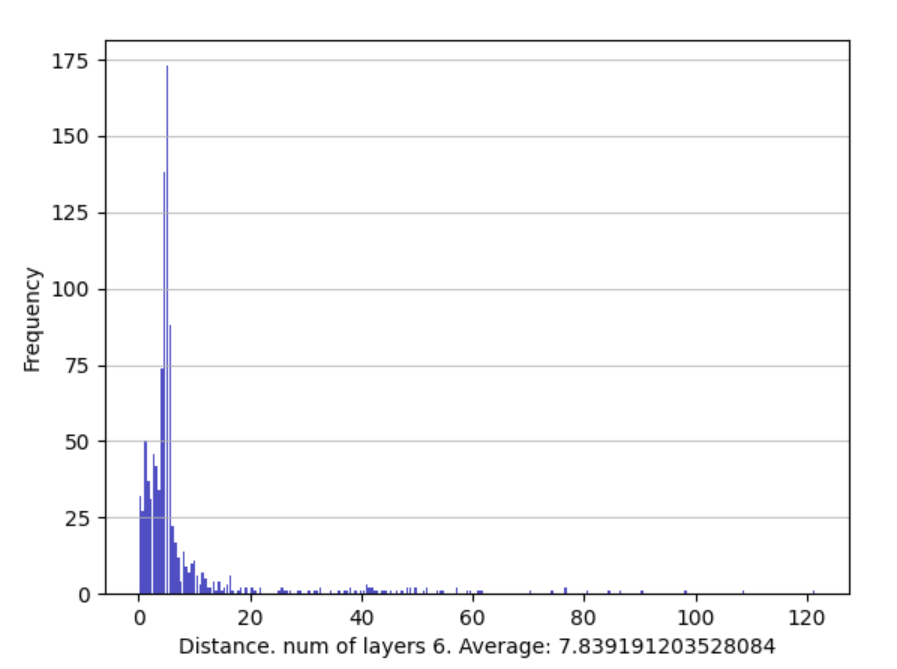}\hfill
    \includegraphics[width=.24\textwidth]{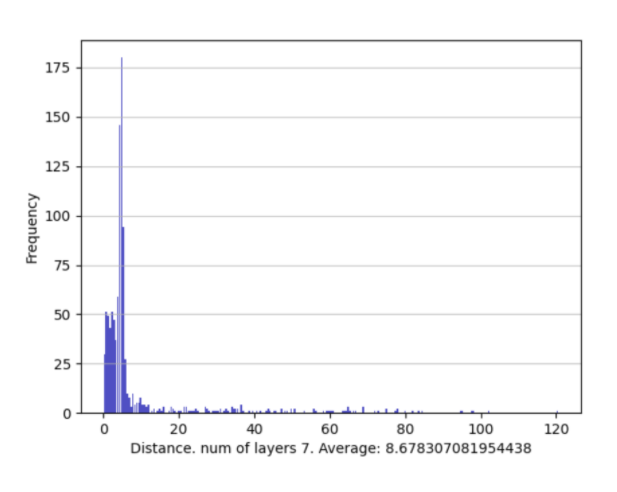}\hfill
    \includegraphics[width=.24\textwidth]{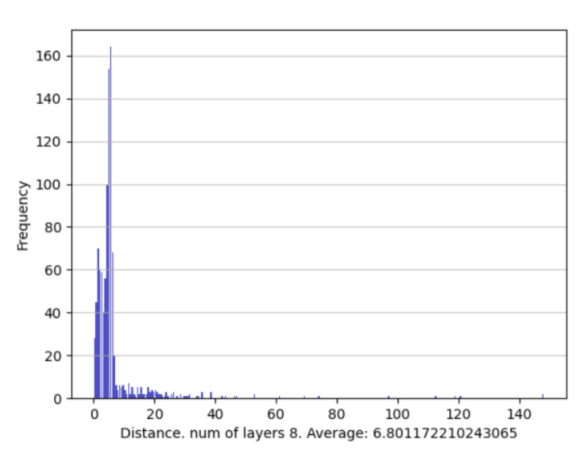}
   \caption{Distance Histograms}\label{fig:foobar}
\end{figure}


\section{Open Question}
A natural open question is to extend our results to more architectures. In this regard we conjecture that theorem \ref{thm:ranodm_weights} remains valid without the assumption that $d_{i+1} = o(d_i)$. We also conjecture that an analogous result is valid for convolutional networks.

\section*{Broader Impact}

Not applicable as far as we can see (this is a purely theoretical paper).

\ack{This research is partially supported by ISF grant 2258/19}

\bibliographystyle{abbrvnat}
\bibliography{bib}

\end{document}